\documentclass{article}

\usepackage[preprint]{neurips_2025}
\usepackage[utf8]{inputenc}
\usepackage[T1]{fontenc}

\usepackage{bm}
\usepackage{amsmath, amsfonts, mathtools}
\usepackage{microtype, xcolor, hyperref, url}
\usepackage{nicefrac, booktabs, setspace}

\usepackage{amsthm}
\newtheorem{theorem}{Theorem}
\newtheorem{definition}{Definition}
\newtheorem{property}{Property}
\theoremstyle{definition}

\usepackage{graphicx, adjustbox, float}
\usepackage{enumitem}

\setlength{\floatsep}    {12pt plus 2pt minus 2pt}  
\setlength{\textfloatsep}{12pt plus 2pt minus 2pt}  
\setlength{\intextsep}   {12pt plus 2pt minus 2pt}  

\usepackage{algorithm, mdframed}
\usepackage[noend]{algpseudocode}

\singlespacing

\newmdenv[
  backgroundcolor=gray!8,
  roundcorner=4pt,
  skipabove=\topsep,
  skipbelow=\topsep,
  innertopmargin=6pt,
  innerbottommargin=6pt,
  innerleftmargin=6pt,
  innerrightmargin=6pt
]{examplebox}

\title{$K$-MSHC: Unmasking Minimally Sufficient Head Circuits in Large Language Models with Experiments on Syntactic Classification Tasks}

\author{%
  Pratim Chowdhary\\
  Department of Computer Science\\
  Dartmouth College\\
  \texttt{cpratim.25@dartmouth.edu} \\
  \And
  Peter Chin \\
  Department of Engineering\\
  Thayer School of Engineering\\
  \texttt{pc@dartmouth.edu} \\
  \And
  Deepernab Chakrabarty \\
  Department of Computer Science\\
  Dartmouth College\\
  \texttt{deepernab@dartmouth.edu} \\
}


\setcounter{totalnumber}{5}
\setcounter{topnumber}{3}
\setcounter{bottomnumber}{3}

\begin{document}

\maketitle

\begin{abstract}
Understanding which neural components drive specific capabilities in mid-sized language models ($\leq$10B parameters) remains a key challenge. 
We introduce the $(\bm{K}, \epsilon)$-Minimum Sufficient Head Circuit ($K$-MSHC), a methodology to identify minimal sets of attention heads 
crucial for classification tasks as well as Search-K-MSHC, an efficient algorithm for discovering these circuits. Applying our Search-K-MSHC algorithm to Gemma-9B, we analyze three syntactic task families: grammar acceptability,
 arithmetic verification, and arithmetic word problems. Our findings reveal distinct task-specific head circuits, with grammar tasks predominantly utilizing early layers, 
 word problems showing pronounced activity in both shallow and deep regions, and arithmetic verification demonstrating a more distributed pattern across the network. 
 We discover non-linear circuit overlap patterns, where different task pairs share computational components at varying levels of importance. While grammar and arithmetic share many "weak" heads, 
 arithmetic and word problems share more consistently critical "strong" heads. Importantly, we find that each task maintains dedicated "super-heads" with minimal cross-task overlap, 
 suggesting that syntactic and numerical competencies emerge from specialized yet partially reusable head circuits.
\end{abstract}
\section{Introduction}

How do language models organize their capabilities? As mid-sized language models ($\leq$10B parameters) master increasingly diverse tasks—from solving arithmetic problems to identifying ungrammatical sentences—a fundamental question emerges: do related capabilities share the same neural circuitry, or does each task develop its own specialized pathway?

This question has profound implications for how we understand, improve, and control these models. If grammatical analysis and arithmetic reasoning utilize the same circuits, then improvements in one capability might automatically enhance the other. Conversely, if each task relies on distinct components, we gain the ability to target interventions precisely—enhancing specific abilities without disrupting others. Yet despite the practical importance of this organizational question, we lack efficient tools to provide definitive answers~\citep{adolfi2025the}.

To address this, we introduce the \textbf{$(K, \epsilon)$-Minimum Sufficient Head Circuit ($K$-MSHC)} framework, an efficient approach for 
isolating sparse subsets of attention heads that minimally, sufficiently explain task performance. This lets us probe the compositional 
structure of model capabilities in a targeted way. We formalize a novel attention head pruning algorithm which results in approximate circuits that are sufficient and minimal with theoretical guarantees.   

Current interpretability research has revealed that individual attention heads specialize in specific functions \citep{voita2019analyzing,clark2019does} and that disabling certain heads can impact performance on specific tasks \citep{michel2019sixteen}. But these findings don't conclusively answer whether related capabilities emerge from shared or separate neural pathways. The missing piece is a principled approach for identifying the minimal set of model components necessary and sufficient for specific tasks—and importantly, for measuring how these sets overlap across different capabilities.

We build on these results and investigate two further questions: 
\begin{enumerate}[nosep, leftmargin=*]
    \item Do linguistic and numerical reasoning tasks recruit shared pathways or task-specific circuits?
    \item What patterns of overlap emerge between circuits for related tasks, and what do these patterns reveal about the organization of knowledge within the model?
\end{enumerate}

Our experiments reveal a clear organizational pattern in Gemma-9B. Grammar tasks predominantly utilize early layers, word problems engage both shallow and deep network regions, and arithmetic verification employs a distributed mechanism across the network. We observe that while grammatical and arithmetic tasks share many weakly contributing computational components, they maintain dedicated "super-heads" with minimal cross-task overlap. This finding indicates that the model develops specialized circuits for different capabilities while efficiently reusing resources where possible.

These results provide evidence for a nuanced view of LLM organization. Rather than implementing either fully general or fully specialized mechanisms, compact models develop task-specific yet partially reusable circuits. This architectural characteristic enables diverse capabilities within parameter constraints and suggests that similar principles may apply to larger frontier models, where understanding organizational structure becomes increasingly important for alignment and control.

In sum, our contributions are three-fold:
\begin{enumerate}[label=(\roman*), leftmargin=*, itemsep=0pt]
  \item We introduce the \textbf{$(\bm{K}, \epsilon)$-Minimum Sufficient Head Circuit ($\bm{K}$-MSHC)} framework, a highly efficient approach to identify minimal sets of attention heads crucial for specific tasks, measuring their 
  \textit{minimality}, \textit{sufficiency}, and \textit{necessity}.
  \item We develop \textbf{Search-K-MSHC}, an efficient stochastic search algorithm that makes circuit discovery feasible in large language models with thousands of attention heads, complemented by our \textbf{Low-Dimensional Linear Separability (LS)} metric that addresses dimensionality challenges when probing representations.
  \item We present a comprehensive analysis of \textbf{syntactic versus arithmetic circuits} in Gemma-9B, revealing both task-specialized components and shared "super-heads" that 
  demonstrate how mid-sized LLMs efficiently encode multiple capabilities through overlapping head circuits.
\end{enumerate}

\section{Related Work}

\textbf{Evaluating and Probing Language Models.} Specialized benchmarks have emerged to track the capabilities of sub-10B parameter models. 
BLiMP \citep{warstadt2020blimp} provides fine-grained assessments of syntactic competence, 
while GSM8K \citep{cobbe2021training} challenges models with grade-school arithmetic problems. 
Recent surveys \citep{zhao2023survey} document how these capabilities exhibit non-uniform scaling properties across parameter thresholds.
Linear probing methods have become standard tools for analyzing information 
encoded in neural representations \citep{alain2016linear}. Extensions such as control tasks \citep{hewitt2019structural} 
and diagnostic classifiers \citep{tenney2019bert} help distinguish linguistic 
structure from memorization artifacts. Our Low-Dimensional Linear Separability (LS) metric builds on this tradition 
but introduces a crucial innovation by projecting to a minimal subspace via PCA before applying a linear 
classifier.

\textbf{Attention Mechanisms and Mechanistic Interpretability.} Previous work has studied attention pattern interpretability 
\citep{clark2019does} and demonstrated that many heads can be pruned without significant performance degradation 
\citep{michel2019sixteen}. Task-specialized heads have been identified through activation analysis \citep{voita2019analyzing}. 
K-MSHC extends these insights by formalizing 
the concept of minimal sufficient circuits—the smallest set of attention heads where any $K$-subset restores task performance.
The emerging field of mechanistic interpretability seeks to reverse-engineer 
neural networks at the component level. Circuit-level analyses have mapped syntactic processing \citep{clark2019does} 
and key-value memories \citep{geva2021transformer}. Work by Olah et al.\,\citep{olah2020zoom} and Elhage et al.\,\citep{elhage2021superposition} 
suggests that capabilities emerge from sparse subnetworks that can be isolated through careful intervention studies. 
Our K-MSHC framework operationalizes this insight in mid-sized models like Gemma-9B.

\textbf{Computational Component Discovery.} Sparse autoencoders and dictionary--learning techniques have recently been leveraged to extract highly interpretable, 
near-monosemantic features from the activations of large language models \citep{cunningham2023sparse,bricken2023dictionary,gao2024sparse,templeton2024scaling}.
These methods complement circuit-level analyses by working at the representational level and provide an alternative path toward isolating task-relevant computational units.
A parallel line of research proposes algorithms that automatically identify local and global circuits using linear 
computation graphs, cross-layer mappings, or feature editing \citep{marks2024sparse,ge2024linear,lindsey2024crosscoders,dunefsky2025transcoders}. 
Our \emph{Search-K-MSHC} algorithm differs from these approaches by explicitly enforcing $K$-sufficiency, yielding minimal sets that are both necessary and redundantly sufficient for a given task.

\textbf{Task-Specific Head Circuits.} Mechanistic analyses have been extended beyond single-step tasks to multi-hop reasoning in language models 
\citep{yang2024multihop,biran2024hopping,yu2025back} and emergent planning behaviour in specialised agents \citep{jenner2025chess,taufeeque2024sokoban,bush2024planning}. 
Our results on arithmetic word problems echo these findings, showing that high-level reasoning engages heads across distant layers that nonetheless admit mid-sized sufficient subsets.
Recent work traces how language models carry out symbolic or approximate arithmetic, attributing addition to 
Fourier-like or trigonometric transformations and heuristic ensembles \citep{stolfo2023arithmetic,zhou2024fourier,nikankin2024heuristics,kantamneni2025trigonometry}. 
The broadly distributed pattern we observe for arithmetic verification aligns with these results, suggesting that numerical operations recruit a wider basis of heads than purely grammatical processing.
Studies of multilingual models reveal that latent grammatical concepts are encoded in shared subspaces across languages, 
with task-specific specialisations layered on top \citep{brinkmann2025latent,dumas2024llama,zhang2024similar}. The partial head overlap we observe between grammar 
and arithmetic tasks may reflect the same ``semantic hub'' principle observed in these cross-lingual analyses.

\section{Methodology}
\label{sec:methods}

We formalize the problem of identifying minimal head circuits responsible for specific model capabilities. Our $(K, \epsilon)$-Minimum Sufficient Head Circuit framework quantifies the causal contribution of attention heads to task performance.

\subsection{The \texorpdfstring{$(K, \epsilon)$}{K-ε}-Minimum Sufficient Head Circuit}

To identify where task-specific knowledge resides in a model, we adopt a causal intervention approach: if removing specific components disrupts the ability to distinguish correct from incorrect examples, those components likely encode the relevant knowledge.

\begin{definition}[Task Separability Score]
For a task $\mathcal{T}$ with dataset $\mathcal{D}_\mathcal{T} = \{(x_i, y_i)\}_{i=1}^n$ where $y_i \in \{-1, 1\}$, the separability score of a model configuration $\mathcal{M}$ is:

\begin{equation}
  \mathcal{S}_{\mathcal{D}_\mathcal{T}}(\mathcal{M}) = \frac{1}{n} \sum_{i=1}^n \mathbb{I}\left[f_\theta(\mathbf{h}_{x_i,L}^{\text{EOS}}) = y_i\right]
\end{equation}

where $\mathcal{M}$ specifies which attention heads are active, $\mathbf{h}_{x_i,L}^{\text{EOS}}$ is the final-layer embedding representation, and $f_\theta: \mathbb{R}^d \rightarrow [0, 1]$ is an optimal classifier over that embedding space.
\end{definition}
We hypothesize that each task relies on a minimal circuit of attention heads that maintains these distinctions. By comparing separability across head subsets, we can identify these circuits and analyze how language models allocate resources between capabilities.

\begin{definition}[(K, $\epsilon$)-Minimum Sufficient Head Circuit]
The $(K, \epsilon)$-Minimum Sufficient Head Circuit identifies the smallest set of attention heads $\mathcal{H} \subset \mathcal{M}$ such that any subset of $K$ heads from $\mathcal{H}$ 
can restore the model's classification performance to over some $\epsilon$ performance threshold.
\end{definition}

Given a baseline model $\mathcal{B}$ (typically a subset of heads in $\mathcal{M}$) and a parameter $\epsilon \in [0, 1]$, we define the understanding threshold:
\begin{equation}
  \label{eq:threshold}
  \mathcal{U}^\epsilon(\mathcal{M}, \mathcal{B}) = \mathcal{S}_{\mathcal{D}_\mathcal{T}}(\mathcal{B}) + \epsilon \cdot (\mathcal{S}_{\mathcal{D}_\mathcal{T}}(\mathcal{M}) - \mathcal{S}_{\mathcal{D}_\mathcal{T}}(\mathcal{B}))
\end{equation}

\begin{definition}[$(K, \epsilon)$-Minimum Sufficient Head Circuit]
The $(K, \epsilon)$-Minimum Sufficient Head Circuit is the smallest set of heads $\mathcal{H} \subset \mathcal{M}$ satisfying:
\begin{equation}
  \label{eq:mshc-definition}
  \forall \mathcal{H}' \subseteq \mathcal{H} \text{ with } |\mathcal{H}'| = K: \mathcal{S}_{\mathcal{D}_\mathcal{T}}((\mathcal{M} \setminus \mathcal{H}) \cup \mathcal{H}') \geq \mathcal{U}^\epsilon(\mathcal{M}, \mathcal{B})
\end{equation}
\end{definition}

This definition has three key properties:
\begin{property}[Key Properties of $(K, \epsilon)$-MSHC] \
\begin{enumerate}[label=(\roman*), leftmargin=*, itemsep=0pt]
  \item \textbf{Minimality}: $\mathcal{H}$ is the smallest set satisfying the condition
  \item \textbf{$K$-Sufficiency}: Any $K$-subset of $\mathcal{H}$ can restore performance
  \item \textbf{Isolated Insignificance}: $\mathcal{M} \setminus \mathcal{H}$ alone contributes minimally to task understanding
\end{enumerate}
\end{property}

The parameter $K$ controls redundancy in the circuit, while $\epsilon$ determines how close to full performance the circuit must restore.

\subsection{Search-K-MSHC: An Efficient Algorithm for Circuit Discovery}

Finding the exact $(K, \epsilon)$-Minimum Sufficient Head Circuit is likely computationally intractable, requiring examination 
on the order of $2^{|\mathcal{M}|}$ possible head subsets. We introduce Search-K-MSHC, a stochastic algorithm with parameters $\mathcal{W}$ 
(window size), $p$ (percentile), and $N$ (samples) that efficiently approximates the solution through two phases:
\begin{enumerate}[label=(\roman*), leftmargin=*, itemsep=0pt]
  \item \textbf{Macro Layer Search}: Identifies critical layers by window-based ablation.
  \item \textbf{Micro Head Search}: Refines head candidates via binary search and stochastic pruning.
\end{enumerate}
\textbf{Macro Layer Search.} We identify task-critical layers via:
\begin{enumerate}[label=(\roman*), leftmargin=*, itemsep=0pt]
  \item Window-based ablation: Slide a window of size $\mathcal{W}$ across adjacent network layers
  \item Initialize array $\textsc{Drop}[1\!:\!L]$ to store performance drop measurement. Update $\textsc{Drop}[\ell] = \min\left(\textsc{Drop}[\ell], \mathcal{S}_{\mathcal{D}_\mathcal{T}}(\mathcal{M}) - \mathcal{S}_{\mathcal{D}_\mathcal{T}}(\mathcal{M} \setminus \mathcal{H}_{\mathcal{W}})\right)$ where $\mathcal{H}_{\mathcal{W}}$ contains all heads in the window
  \item Select high-impact layers (top $p$-th percentile) for candidate set $\mathcal{C}$ from $\textsc{Drop}[1\!:\!L]$
\end{enumerate}
\textbf{Stochastic Head Pruning.} With baseline $\mathcal{B} = \mathcal{M} \setminus \mathcal{C}$, we:
\begin{enumerate}[label=(\roman*), leftmargin=*, itemsep=0pt]
  \item Start with subset size $k = \lfloor|\mathcal{C}|/2\rfloor$
  \item Sample $N$ random $k$-subsets and find worst-performing $\Theta_{\min} = \arg\min_{i} \mathcal{S}_{\mathcal{D}_\mathcal{T}}(\mathcal{B} \cup \Theta_i)$
  \item If $\mathcal{S}_{\mathcal{D}_\mathcal{T}}(\mathcal{B} \cup \Theta_{\min}) < \mathcal{U}^{\epsilon}(\mathcal{M}, \mathcal{B})$, prune $\mathcal{C} \gets \mathcal{C} \setminus \Theta_{\min}$; else reduce $k \gets \max(K, \lfloor k/2 \rfloor)$
  \item Terminate when $k=K$ and threshold met
\end{enumerate}

\subsection{Theoretical Analysis of Search-K-MSHC}

\begin{definition}
Let $\mathcal{C}_{i}$ be the candidate set at iteration $i$, and let $\tau = \mathcal{U}^{\epsilon}(\mathcal{M}, \mathcal{B})$ be our performance threshold. For parameters $0 \leq \delta_i, \delta_T \leq 1$, we define:
\begin{enumerate}[label=(\roman*), leftmargin=*, itemsep=0pt]
  \item \textit{Low-impact heads}: Set $\mathcal{Q} \subseteq \mathcal{C}_{i}$ with $|\mathcal{Q}| = \delta_i |\mathcal{C}_{i}|$ s.t. $\forall \mathcal{X} \subseteq \mathcal{C}_i$ with $|\mathcal{X}| = K$ and $|\mathcal{X} \cap \mathcal{Q}| > \delta_T K$: $\mathcal{S}_{\mathcal{D}_\mathcal{T}}((\mathcal{M} \setminus \mathcal{C}_{i}) \cup \mathcal{X}) \leq \tau$,
  or heads that don't meaningfully contribute to "understanding"
  
  \item \textit{Prunable sets}: $\mathcal{P}_i = \{\mathcal{X} \subseteq \mathcal{C}_i : |\mathcal{X}| = K, |\mathcal{X} \cap \mathcal{Q}| > \delta_T K\}$, the $K$-subsets containing too many low-impact heads
  that we would like to prune.
\end{enumerate}
\end{definition}
These assumptions are made to simplify the analysis but likely emulate actual head importance patterns, with a number of unimportant heads 
over some threshold likely leading to prunable sets.

\begin{theorem}[Expected Missed Prunable Sets]
\label{thm:missed_sets}
For a candidate set $\mathcal{C}_i$ with a contamination rate $\delta_i$ of low-impact heads and threshold parameter $\delta_T < \delta_i$, the expected number of prunable sets that remain undetected after sampling $N$ random $K$-subsets is:
\begin{equation}
\mathbb{E}[\text{Undetected Sets}] = O\left(\frac{\delta_i |\mathcal{C}_i|}{\delta_T K} \cdot \exp\left(-NK (\delta_i - \delta_T)^2\right)\right)
\end{equation}
\end{theorem}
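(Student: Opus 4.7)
The plan is to combine a concentration-of-measure argument with a combinatorial counting bound. First, I would let $U$ denote the random number of prunable sets in $\mathcal{P}_i$ that are not flagged by any of the $N$ sampled $K$-subsets, and apply linearity of expectation:
\begin{equation*}
\mathbb{E}[U] = \sum_{\mathcal{X} \in \mathcal{P}_i} \Pr[\mathcal{X}\ \text{is undetected}],
\end{equation*}
where ``undetected'' means that none of the $N$ samples witnesses $\mathcal{X}$ as dropping performance below $\tau$. This splits the task into (i) bounding the per-pattern tail probability and (ii) controlling the number of essentially distinct patterns that must be covered.

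Second, for the exponential factor, I would treat each $\Theta_j$ as a uniform random $K$-subset of $\mathcal{C}_i$, so that $|\Theta_j \cap \mathcal{Q}|$ is hypergeometric with mean $\delta_i K$. A Hoeffding/Serfling bound for sampling without replacement gives $\Pr[|\Theta_j \cap \mathcal{Q}| \leq \delta_T K] \leq \exp(-2K(\delta_i - \delta_T)^2)$, and since the $N$ samples are drawn independently, the probability that no draw flags a given prunable signature decays as $\exp(-NK(\delta_i - \delta_T)^2)$, with the constant $2$ absorbed into the $O(\cdot)$ notation.

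Third, the polynomial prefactor requires a counting argument far tighter than the naive $|\mathcal{P}_i| \leq \binom{|\mathcal{C}_i|}{K}$. I would group prunable sets into equivalence classes indexed by the ``witness'' block of $\delta_T K$ low-impact heads they must contain: since the total supply of low-impact heads is $|\mathcal{Q}| = \delta_i|\mathcal{C}_i|$ and each class consumes at least $\delta_T K$ of them, a simple averaging/double-counting argument yields at most $O(\delta_i|\mathcal{C}_i|/(\delta_T K))$ essentially distinct signatures. A single detection event within a class simultaneously covers every prunable set in that class, so the per-class failure probability from the previous step suffices; multiplying the two bounds yields the stated rate.

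The main obstacle will be formalizing the grouping step so that the factor $\delta_i|\mathcal{C}_i|/(\delta_T K)$ truly replaces $|\mathcal{P}_i|$: one needs a partition (or fractional cover) of $\mathcal{P}_i$ in which each part is witnessed by a common low-impact block, along with the observation that missing a part is equivalent to all $N$ samples avoiding that witness. Secondary subtleties include invoking Hoeffding without replacement and verifying that the internal dependence within each $\Theta_j$ does not weaken the exponent; both should follow from standard negative-association arguments, but the book-keeping deserves care.
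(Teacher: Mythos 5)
Your plan matches the paper's proof in every essential: the same hypergeometric tail bound via Hoeffding for a single draw, the same multiplication of the exponent by $N$ over independent samples, and the same packing-style argument that replaces $|\mathcal{P}_i|$ by $|\mathcal{Q}|/(\delta_T K)$ because each "essentially distinct" prunable set consumes at least $\delta_T K$ elements of $\mathcal{Q}$ (the paper phrases this as counting only non-overlapping prunable sets, bounded by $|\mathcal{Q}|/(\delta_T K + 1)$). The grouping subtlety you flag is real but is glossed over in the paper's own proof as well, so your proposal is, if anything, slightly more careful while following the identical route.
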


\begin{proof}
For $K$ randomly sampled heads, the probability of missing a prunable set follows a hypergeometric distribution $H \sim \text{Hypergeometric}(|\mathcal{C}|, |\mathcal{Q}|, K)$. By Hoeffding's inequality:
\begin{equation}
  \Pr[H \leq \delta_T \cdot K] \leq \exp\left(-2K \cdot (\delta_i - \delta_T)^2\right)
\end{equation}

With $N$ independent samples, the miss probability becomes:
\begin{equation}
  \Pr[\text{All $N$ Samples Miss}] \leq \exp\left(-2NK \cdot (\delta_i - \delta_T)^2\right)
\end{equation}

Since we can only catch non-overlapping prunable sets, the maximal number of undetected prunable sets is bounded by $|\mathcal{Q}| / (\delta_T \cdot K + 1)$. By linearity of expectation:
\begin{align}
  \mathbb{E}[\text{Undetected Sets}] &\leq \frac{|\mathcal{Q}|}{\delta_T \cdot K + 1} \cdot \exp\left(-2NK \cdot (\delta_i - \delta_T)^2\right) \\
  &= O\left(\frac{\delta_i |\mathcal{C}_i|}{\delta_T K} \cdot \exp\left(-NK (\delta_i - \delta_T)^2\right)\right) \tag{since $|\mathcal{Q}| = \delta_i |\mathcal{C}_i|$}
\end{align}
\end{proof}
\vspace{-1cm}
The key takeaways from this result are two-fold:
\begin{enumerate}[label=(\roman*), leftmargin=*, itemsep=0pt]
  \item The expected number of missed prunable sets decreases exponentially with $N$ (samples), $K$ (subset size), and $\Delta_i^2 = (\delta_i - \delta_T)^2$ (squared margin).
  \item Critically, when the low-impact ratio $\delta_i$ is much higher than the threshold $\delta_T$, the algorithm is extremely unlikely to terminate with significant numbers of prunable sets remaining.
\end{enumerate}

\begin{algorithm}[H]
    \setstretch{1.3}
    \caption{Search-K-MSHC}
    \label{alg:mshc}
    \begin{algorithmic}[1]
    \Require window size $W$, percentile $p$, samples per iteration $N$
    \State initialise array $\textsc{Drop}[1\!:\!L]\gets 0$
    \For{$s = 1$ \textbf{to} $L-W+1$} \Comment{\textbf{Macro layer search}}
        \State $\mathcal{H}_W\gets$ heads in layers $s{:}s+W-1$
        \For{$\ell = s$ \textbf{to} $s+W-1$}
            \State $\textsc{Drop}[\ell]=\min\!\bigl(\textsc{Drop}[\ell],\mathcal{S}_{\mathcal{D}_\mathcal{T}}(\mathcal{M}) - \mathcal{S}_{\mathcal{D}_\mathcal{T}}(\mathcal{M} \setminus \mathcal{H}_W)\bigr)$
        \EndFor
    \EndFor
    \State $\mathcal{C}\gets\{\mathcal{H}_\ell\mid\textsc{Drop}[\ell]\ge\text{top }p\text{-th percentile of }\textsc{Drop}\}$ \Comment{sensitive layers}
    \State $\mathcal{B}\gets \mathcal{M} \setminus \mathcal{C}$ \Comment{baseline model}
    \State $k\gets\left\lfloor |\mathcal{C}| / 2 \right\rfloor$
    \While{$k \ge K$} \Comment{\textbf{Stochastic head pruning}}
        \State $\mathrm{best}\gets 1$;\; $\Theta_{\mathrm{min}}\gets\emptyset$
        \For{$i = 1$ \textbf{to} $N$}
            \State draw $\Theta\sim\mathrm{Unif}\{\mathcal{X}\subseteq\mathcal{C}:|\mathcal{X}|=k\}$
            \If{$\mathcal{S}_{\mathcal{D}_\mathcal{T}}(\mathcal{B} \cup \Theta) < \mathrm{best}$}
                \State $\mathrm{best}\gets\mathcal{S}_{\mathcal{D}_\mathcal{T}}(\mathcal{B} \cup \Theta)$;\; $\Theta_{\mathrm{min}}\gets\Theta$
            \EndIf
        \EndFor 
        \If{$\mathrm{best} \le \mathcal{U}^{\mathcal{S}}_{\mathcal{D}_\mathcal{T}}(\mathcal{M}, \mathcal{B})$}
            \State $\mathcal{C}\gets\mathcal{C} \setminus \Theta_{\mathrm{min}}$ \Comment{safe to prune}
        \Else
            \State $k\gets\max(K,\lfloor k/2\rfloor)$
        \EndIf
    \EndWhile
    \State \Return $\mathcal{C}$
    \end{algorithmic}
    \end{algorithm}

\subsection{Scoring Metric: Low-Dimensional Linear Separability (LS)}
\label{subsec:lsmetric}

Standard linear probes for analyzing LLM representations often overfit due to high dimensionality ($d \approx 4096$) and limited training data. 
To address this challenge, we introduce Low-Dimensional Linear Separability (LS), focused on the final layer's EOS token representations
as our scoring metric to guide the search for minimal sufficient head circuits. It operates in two phases:

\textbf{Dimensionality Reduction.} We project to a subspace preserving maximal variance by computing the empirical covariance matrix $\Sigma_L = \frac{1}{|\mathcal{D}_\mathcal{T}|}\sum_{x}\left(\mathbf{h}_{x,L}^{\text{EOS}} - \bar{\mathbf{h}}_{L}\right)\left(\mathbf{h}_{x,L}^{\text{EOS}} - \bar{\mathbf{h}}_{L}\right)^{\top}$, extracting its top $D$ eigenvectors $\mathbf{W}_L$, and projecting:
\begin{equation}
  \tilde{\mathbf{h}}_{x,L} = \mathbf{W}_L^\top(\mathbf{h}_{x,L}^{\text{EOS}} - \bar{\mathbf{h}}_{L}) \in \mathbb{R}^D
\end{equation}

\textbf{Classification.} We train a linear SVM by optimizing the regularized hinge loss:
\begin{equation}
  \min_{\mathbf{w},b}\; \frac{1}{2}\|\mathbf{w}\|_2^2 +\;C\sum_{i=1}^{|\mathcal{D}_\mathcal{T}|} \max\left(0,\,1-y_i(\mathbf{w}^\top\tilde{\mathbf{h}}_{x_i,L}+b)\right), \quad C=10
\end{equation}
and calculate LS score as classification accuracy: $\operatorname{LS}^{D}_{\mathcal{D}_\mathcal{T}} = \frac{1}{|\mathcal{D}_\mathcal{T}|} \sum_{i=1}^{|\mathcal{D}_\mathcal{T}|}\mathbb{I}[\operatorname{sign}(\mathbf{w}^\top\tilde{\mathbf{h}}_{x_i,L}+b)=y_i]$

By restricting to $D \leq 5$ dimensions, we ensure the metric captures task-relevant information rather than dimensionality artifacts. This approach efficiently detects when attention heads encode task-specific knowledge in their representations.
\subsection{Evaluation Framework: Task Families and Dataset Construction}
\label{subsec:tasks}

We evaluate K-MSHC using three task families with controlled minimal pairs, where examples differ only in a single task-relevant feature. All tasks have balanced classes and use a consistent yes/no formulation.

\textbf{Grammar Acceptability (G).} Based on BLiMP \citep{warstadt2020blimp} 
(67,000 sentence pairs), focusing on determiner-noun agreement
due to the inherent numerocity of the task:

\begin{examplebox}\footnotesize
\textbf{Correct:} Leslie isn't firing that actress.\\[2pt]
\textbf{Incorrect:} Leslie isn't firing that actresses.
\end{examplebox}

\textbf{Arithmetic Verification (A).} 1000 algorithmically generated equation pairs with random operands $n_1, n_2 \in [1, 10^3]$, operations $\in \{+, -\}$, and perturbed answers (0.5 / 1.5$\times$) for incorrect equations:

\begin{examplebox}\footnotesize
\textbf{Correct:} 1338 + 88 = 1426\\[2pt]
\textbf{Incorrect:} 1338 + 88 = 2139 \quad \textit{($\approx$ 1.5 $\times$ correct result)}
\end{examplebox}

\textbf{Word Problems (W).} 100 natural language arithmetic problems templates filled with random numbers using the same methodology as (A), with (0.5 / 1.5$\times$) perturbed answers for incorrect equations:

\begin{examplebox}\footnotesize
\textbf{Correct:} Tim has 5 apples and eats 2, leaving him with 3 apples.\\[2pt]
\textbf{Incorrect:} Tim has 5 apples and eats 2, leaving him with 5 apples. \quad \textit{($\approx$ 1.5 $\times$ correct result)}
\end{examplebox}

This progression from grammatical knowledge (G) to abstract computation (A) to contextualized reasoning (W) allows analysis of both task-specific circuits and potential shared components.

\section{Experiments}

Using our K-MSHC framework, we probe Gemma-9B to analyze which head circuits are responsible for the models understanding of grammar, arithmetic, and word problems tasks.
We run experiments with parameters $\mathcal{W}=5$, $p=0.75$, $N=10$, $K=10$, and $\epsilon=0.25$ 
across 20 trials with mini-batches of 50 (positive and negative) examples per task.
All experiments were conducted on Nvidia H100 GPUs with 50 GB of memory.
We used PyTorch for all LLM inference and manipulation, while scikit-learn was employed for implementing PCA dimensionality reduction and SVM classifiers for the Linear Separability metrics.
\subsection{Baseline Performance Analysis}
\label{subsec:protocol}
Before identifying specific head circuits, we first establish that information about our classification tasks is indeed encoded in the model's representations. Table~\ref{tab:baseline_performance} presents the Linear Separability (LS) scores for each task, showing that Gemma-9B's representations naturally encode strong task-relevant information, with baseline LS scores ranging from 0.77 to 0.99. When critical layers (identified through ablation) are removed, performance drops substantially across all tasks, with different tasks showing varying sensitivity to layer ablation. Arithmetic verification exhibits the largest drop (44\%), followed by grammar (36\%) and word problems (21\%).
\begin{table}[h]
\centering
\begin{tabular}{lccc}
\toprule
\textbf{Task} & \textbf{Baseline LS Score} & \textbf{LS Score Post 25\% Layer Ablation} & \textbf{Drop} \\
\midrule
Arithmetic & $0.99\ [0.98, 1.00]$ & $0.55\ [0.52, 0.60]$ & 44\% \\
Grammar & $0.86\ [0.78, 0.90]$ & $0.50\ [0.49, 0.52]$ & 36\% \\
Word Problems & $0.77\ [0.73, 0.86]$ & $0.56\ [0.53, 0.61]$ & 21\% \\
\bottomrule
\end{tabular}
\vspace{0.5cm}
\caption{Linear separability scores before and after ablating critical layers, 
showing task-dependent information encoding within the model architecture. 
Values show medians with 95\% confidence intervals in square brackets.}
\label{tab:baseline_performance}
\end{table}
\vspace{-0.5cm}
\subsection{Distinct Head Circuits for Different Capabilities}
\begin{figure}[htbp]
    \centering
    \includegraphics[width=\textwidth]{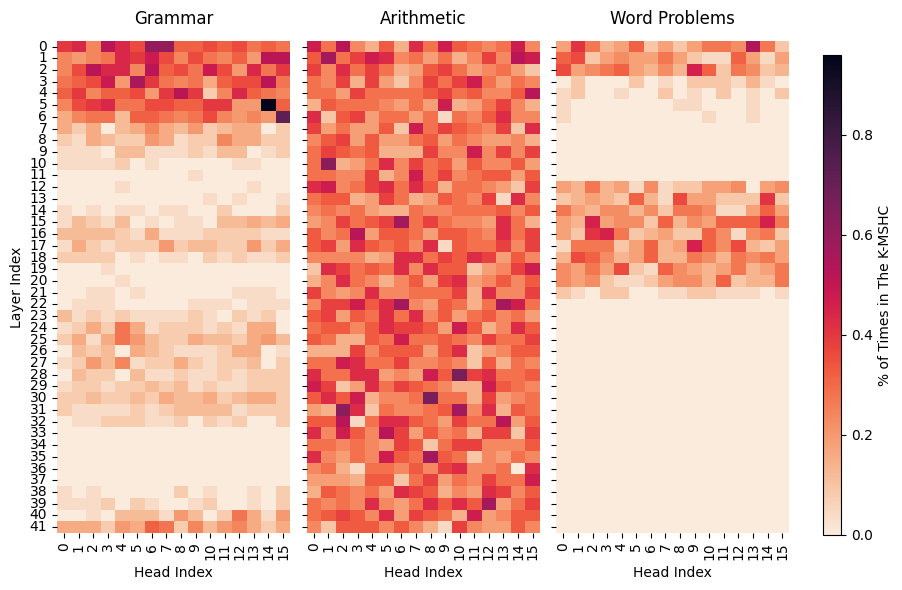}
    \vspace{-0.2cm}
    \caption{Heat map of attention head importance across model layers. Each cell represents an attention head, with color intensity showing selection frequency across 20 trials.}
    \label{fig:head_distribution}
\end{figure}
Figure \ref{fig:head_distribution} reveals the spatial distribution of attention heads identified by our K-MSHC algorithm across the model architecture. 
Analysis of these patterns uncovers three key organizational principles in Gemma-9B's computational structure.

\textbf{Architectural Specialization:} Each task engages a distinctive subset of the network. Grammar processing primarily activates early layers (0-6) with 
specific banding patterns in later regions. Word problems utilize a bimodal distribution with concentrated activity in both shallow (0-3) and deep (11-20) regions. 
Arithmetic verification shows the most distributed pattern, engaging heads across the entire network with more uniform density.
    
\textbf{Intra-Layer Selectivity:} Within even the most important layers, not all heads contribute equally. Attention patterns show high selectivity, 
with only a few heads per layer being consistently critical. This suggests a sparse coding principle where specific head combinations—rather
 than entire layers—form the building blocks of task-specific circuits.
    
\textbf{Task-Dependent Organization:} Structurally related tasks demonstrate different patterns of head criticality. The well-defined task 
boundaries of grammar and arithmetic verification correlate with concentrated "super-head" patterns—specific heads that appear in almost all 
K-MSHCs for these tasks. In contrast, word problems require contextual reasoning across both linguistic and numerical domains, resulting 
in more diffuse activation patterns without dominant super-heads.

These findings demonstrate that language capabilities are not uniformly distributed throughout the model but are encoded through sparse, task-specialized circuits with distinct architectural signatures.

\subsection{Circuit Overlap Analysis}
To investigate how computational resources are shared across tasks, we analyzed the overlap between circuits identified for each task pair. Figure \ref{fig:head_overlap} visualizes this overlap at different selection thresholds (percentage of most frequently selected heads):
\begin{figure}[htbp]
    \centering
    \includegraphics[width=\textwidth]{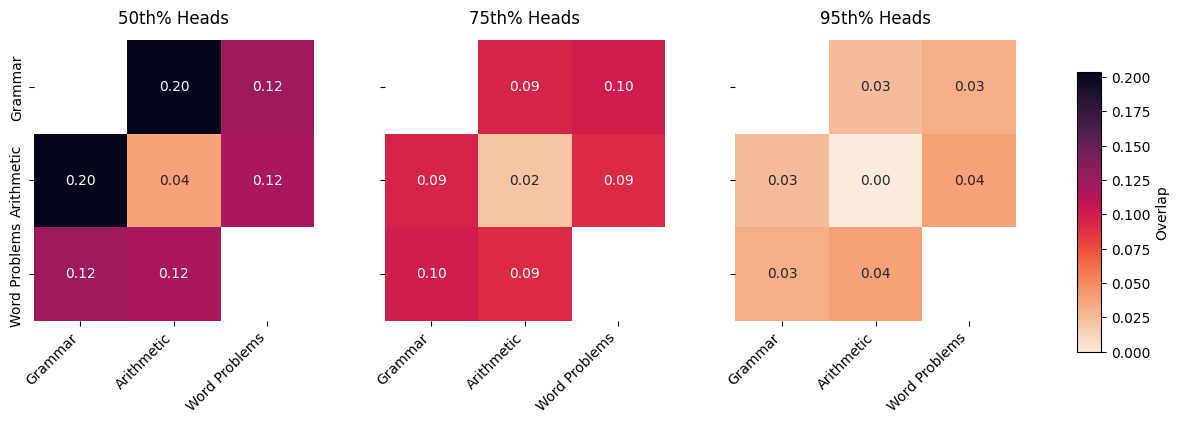}
    \vspace{-0.2cm}
    \caption{Circuit overlap (Jaccard similarity) between task pairs. The matrix shows overlap percentages at three selection thresholds: 50\% (weaker but more numerous heads), 75\% (moderate), and 95\% (strongest "super-heads"). The central region indicates three-way overlap.}
    \label{fig:head_overlap}
\end{figure}

Our analysis reveals several nuanced patterns in how Gemma-9B shares computational resources across tasks.

\textbf{Sharing "Weak" vs "Strong" Heads:} Task pairs exhibit different sharing patterns depending on head importance. 
    At the 50\% threshold (including weaker heads), grammar and arithmetic show substantial overlap ($\approx$20\%). However, this relationship inverts at the 75\% threshold, 
    where arithmetic and word problems share more critical heads ($\approx$10\%) than grammar-arithmetic pairs ($\approx$9\%). This finding suggests that tasks 
    either share "weak" heads that are vaguely related or share "strong" heads that are more specific to the task overlap.
    
\textbf{Specialized "Super-Heads":} The strongest heads (95\% threshold) show minimal cross-task overlap, with each task pair maintaining dedicated circuits of "super-heads".
     This separation is particularly striking given that arithmetic verification and word problems involve related numerical reasoning, yet their most critical components remain largely distinct. 
     This finding challenges the notion that higher-level capabilities like "arithmetic reasoning" have a single circuit implementation within the model.
    
\textbf{Non-Uniform Resource Allocation:} The asymmetric pattern of overlap reduction across thresholds—with grammar-arithmetic showing steeper decline than other pairs—indicates that head importance has task-specific scaling properties. This suggests that the model allocates computational resources non-uniformly, with some task relationships maintaining more robust sharing across importance levels than others.
These overlap patterns reveal that Gemma-9B balances specialization and resource sharing through a hierarchical organization of computational components.
 While allowing partial resource reuse, particularly for related tasks, the model also maintains dedicated circuits for each capability's core processing requirements.

\section{Limitations and Future Work}
While our K-MSHC framework reveals important insights about attention circuit organization, several limitations should be acknowledged:

\textbf{Algorithmic Assumptions.} Our theoretical analysis relies on simplifying assumptions about head contamination rates and their distribution across the network. The convergence guarantees are strongest when the distribution of low-impact heads is well-separated from task-critical heads, but real-world models may exhibit more complex patterns of head importance with less clear separation.

\textbf{Circuit Specificity.} The triangulation of head circuits was not as precise as initially anticipated. We observed some variation in the specific heads identified across trials, suggesting that multiple distinct but functionally equivalent circuits may exist for each task. This redundancy makes it challenging to definitively map the exact set of heads responsible for a capability.

\textbf{Model and Parameter Sensitivity.} Our analysis is limited to a single model architecture (Gemma-9B) with one set of hyperparameters for the search algorithm. While we found our approach effective, the circuits identified may be sensitive to the choice of $K$, $\epsilon$, and other parameters, particularly for tasks where performance is distributed across many weakly-contributing heads.

\textbf{Limited Task Diversity.} Our focus on three specific task families provides an insightful but still limited view of how language capabilities are organized. More complex reasoning, world knowledge, or multimodal tasks might reveal different circuit structures and overlap patterns.
Future work should address these limitations by:
\begin{enumerate}[label=(\roman*), leftmargin=*, itemsep=0pt]
  \item Exploring broader parameter settings across different model architectures to identify invariant circuits and understand sensitivity effects
  \item Developing refined methods for handling redundant circuits while investigating their activation dynamics during inference
  \item Extending analysis to more complex tasks that engage multiple capabilities simultaneously to better map the functional organization of language models
\end{enumerate}

These extensions would strengthen our understanding of how language capabilities are mechanistically implemented in neural architectures and potentially enable more targeted model interventions and improvements.

\section{Conclusion}

Our work introduces K-MSHC, a framework for identifying minimal sufficient head circuits in mid-sized language models, revealing that different syntactic tasks utilize distinct neural pathways in Gemma-9B. Grammar tasks predominantly activate early layers (0-6), while word problems utilize specific bands in both early and deep regions, with arithmetic verification showing more distributed patterns. We found non-linear patterns of circuit overlap, with grammar and arithmetic sharing more "weak" heads while arithmetic and word problems share more "strong" heads, indicating that despite partial resource sharing, each task maintains dedicated "super-heads" with minimal overlap at high thresholds. These findings advance mechanistic interpretability by demonstrating that language capabilities emerge from specialized but partially reusable circuits rather than fully general mechanisms, suggesting that future research should focus on identifying and understanding these sparse computational primitives across different model scales and architectures.

\bibliographystyle{plainnat}
\bibliography{references}

\end{document}